\def\endfigure{\end@float}
\def\endtable{\end@float}
\newcommand{\newchangesnorev}[1]{#1}
\newcommand{\removelatexerror}{\let\@latex@error\@gobble}
\newcommand{\lwdth}{2pt}
\def\thickhline{%
  \noalign{\ifnum0=`}\fi\hrule \@height \thickarrayrulewidth \futurelet
   \reserved@a\@xthickhline}
\def\@xthickhline{\ifx\reserved@a\thickhline
               \vskip\doublerulesep
               \vskip-\thickarrayrulewidth
             \fi
      \ifnum0=`{\fi}}
\newlength{\thickarrayrulewidth}
\newcommand{\PreserveBackslash}[1]{\let\temp=\\#1\let\\=\temp}
\newcolumntype{L}[1]{>{\raggedright\let\newline\\\arraybackslash\hspace{0pt}}m{#1}}
\newcolumntype{C}[1]{>{\centering\let\newline\\\arraybackslash\hspace{0pt}}m{#1}}
\newcolumntype{R}[1]{>{\raggedleft\let\newline\\\arraybackslash\hspace{0pt}}m{#1}}
\def\@endtheorem{\endtrivlist}
\newtheorem{problem}{Problem}
\newtheorem{definition}{Definition}
\newtheorem{corollary}{Corollary}
\newtheorem{example}{Example}
\newtheorem{remark}{Remark}
\newtheorem{theorem}{Theorem}
\newtheoremstyle{proofstyle}
  {0em}    
  {0em}    
  {\itshape}   
  {}           
  {\bfseries}  
  {.}          
  {.2em}       
  {}          
\theoremstyle{proofstyle}
\newcommand{\mc}[1]{\mathcal{#1}}
\newcommand{\ith}{i^{\text{th}}}
\newcommand{\jth}{j^{\text{th}}}
\newcommand{\x}{\mathbf{x}}
  \def\\{}%
  \def\texttt#1{<#1>}%
  \def\cref#1{<#1>}%
\def\@opargbegintheorem#1#2#3{\trivlist
   \item[]{\itshape #1\ #2\ (#3)}\\}
\newcommand{\Prob}[2][]{%
  \operatorname{Pr}\ifthenelse{\isempty{#1}}{}{\left(#2\right)}%
}
\newcommand{\cgoal}{{c_{g}}}
\newlist{prbdes}{itemize}{1}
\setlist[prbdes]{label={${\mathcal{P}_1}$:\hspace*{-12pt}}}
\patchcmd{\enit@itemize@i}{\fi}{\fi\ifnum\pdfstrcmp{\@currenvir}{prbdes}=0 \item\fi}{}{}
\newcolumntype{f}{>{\centering\arraybackslash}p{.2\columnwidth}}
\newcolumntype{s}{>{\centering\arraybackslash}p{.08\columnwidth}}
\newcolumntype{z}{>{\centering\arraybackslash}p{.1\columnwidth}}
\setlist[prbdes]{label={${\mathcal{P}_1}$:\hspace*{-12pt}}}
\patchcmd{\enit@itemize@i}{\fi}{\fi\ifnum\pdfstrcmp{\@currenvir}{prbdes}=0 \item\fi}{}{}
\newcommand{\numquantiles}{N_{\tau}}
\definecolor{aquaenv}{HTML}{03FFFE}
\definecolor{purpenv}{HTML}{8783EA}
\definecolor{greenenv}{HTML}{7BE76D}
\definecolor{redenv}{HTML}{FF0F42}
\definecolor{mplblue}{RGB}{31, 119, 180}
\definecolor{arxblue}{RGB}{48, 51, 154}
\definecolor{mplred}{RGB}{214, 39, 40}
\definecolor{pargreen}{HTML}{217222}
\definecolor{parblue}{HTML}{165887}
\definecolor{parorange}{HTML}{FABE86}
\definecolor{mplgrey}{RGB}{127, 127, 127}
\definecolor{darkgray}{rgb}{0.663,0.663,0.663}
\newcommand{\tikzcircle}[2][red,fill=red]{\tikz[baseline=-0.5ex]\draw[#1,radius=#2] (0,0) circle ;}%
\newrobustcmd*{\tikzsquare}[1]{\tikz{\filldraw[draw=black,fill=#1] (0,0)
rectangle (0.2cm,0.2cm);}}
\newrobustcmd*{\mycircle}[1]{\tikz{\filldraw[draw=#1,fill=#1] (0,0) circle [radius=0.1cm];}}
\newrobustcmd*{\tikztriangle}[1]{\tikz{\filldraw[draw=black,fill=#1] (0,0) --
(0.2cm,0) -- (0.1cm,0.2cm);}}
\colorlet{DarkAquaEnv}{Aquamarine}
\colorlet{DarkPurpEnv}{RoyalPurple}
\colorlet{DarkGreenEnv}{ForestGreen}
\colorlet{DarkRedEnv}{redenv}
\newcommand{\avi}{AVI}
\newcommand{\qravi}{QR-AVI}
\newcommand{\eqravi}{\mathbb{E}\text{-\qravi}}
\newcommand{\rsqravi}{\rho\text{-\qravi}}
\pgfplotsset{compat=1.18}
\newif\ifblackandwhitecycle
\gdef\patternnumber{0}
        \gdef\patternnumber{1}
        \gdef\patternnumber{1}
     \gdef\patternnumber{0}
        \pgfgetlastxy{\imagewidth}{\imageheight}
        \global\let\imagewidth=\imagewidth
        \global\let\imageheight=\imageheight
        \gdef\columncount{1}
        \gdef\rowcount{1}
\newcommand\phantomimage{%
    \phantom{%
        \rule{\imagewidth}{\imageheight}%
    }%
}
\newcounter{zoomboxnumber}
\newcommand\zoombox[3][]{
    \begin{scope}[zoombox paths]
        \stepcounter{zoomboxnumber}  
        \pgfmathsetmacro\xpos{
            (\columncount-1)*(\imagewidth / \pgfkeysvalueof{/tikz/zoomboxarray columns} + \pgfkeysvalueof{/tikz/zoomboxarray inner gap} / \pgfkeysvalueof{/tikz/zoomboxarray columns} ) + \pgflinewidth
        }
        \pgfmathsetmacro\ypos{
            (\rowcount-1)*( \imageheight / \pgfkeysvalueof{/tikz/zoomboxarray rows} + \pgfkeysvalueof{/tikz/zoomboxarray inner gap} / \pgfkeysvalueof{/tikz/zoomboxarray rows} ) + 0.5*\pgflinewidth
        }
        \edef\dospy{\noexpand\spy [
            #1,
            zoombox paths/.append style={
                black and white pattern=\patternnumber
            },
            every spy on node/.append style={#1},
            x=\imagewidth,
            y=\imageheight
        ] on (#2) in node [anchor=north west] (\thezoomboxnumber) at ($(zoomboxes container.north west)+(\xpos pt,-\ypos pt)$) {}
        node[] at ($(\thezoomboxnumber.north)+(9mm,-2mm)$) {#3};
        }
        \dospy
        \pgfmathtruncatemacro\pgfmathresult{ifthenelse(\columncount==\pgfkeysvalueof{/tikz/zoomboxarray columns},\rowcount+1,\rowcount)}
        \global\let\rowcount=\pgfmathresult
        \pgfmathtruncatemacro\pgfmathresult{ifthenelse(\columncount==\pgfkeysvalueof{/tikz/zoomboxarray columns},1,\columncount+1)}
        \global\let\columncount=\pgfmathresult
        \ifblackandwhitecycle
            \pgfmathtruncatemacro{\newpatternnumber}{\patternnumber+1}
            \global\edef\patternnumber{\newpatternnumber}
        \fi
    \end{scope}
}
\title{Safety-Aware Reinforcement Learning for Control via Risk-Sensitive Action-Value Iteration and Quantile Regression}
\author{
  Clinton Enwerem, Aniruddh G.~Puranic, John S.~Baras, and Calin Belta\\
  Institute for Systems Research\\
  University of Maryland, College Park, United States\\
  \texttt{\{enwerem, puranic, baras, calin\}@umd.edu} \\
}
\begin{document}
\maketitle


\begin{abstract}
Mainstream approximate action-value iteration reinforcement learning (RL) algorithms suffer from overestimation bias, leading to suboptimal policies in high-variance stochastic environments. Quantile-based action-value iteration methods reduce this bias by learning a distribution of the expected cost-to-go using quantile regression. However, ensuring that the learned policy satisfies safety constraints remains a challenge when these constraints are not explicitly integrated into the RL framework. Existing methods often require complex neural architectures or manual tradeoffs due to combined cost functions. To address this, we propose a risk-regularized quantile-based algorithm integrating Conditional Value-at-Risk (CVaR) to enforce safety without complex architectures. We also provide theoretical guarantees on the contraction properties of the risk-sensitive distributional Bellman operator in Wasserstein space, ensuring convergence to a unique cost distribution. Simulations of a mobile robot in a dynamic reach-avoid task show that our approach leads to more goal successes, fewer collisions, and better safety-performance trade-offs than risk-neutral methods.
\end{abstract}

\section{Introduction}
\label{sec:intro}

Standard Reinforcement Learning (RL) methods optimize expected cumulative cost but often use \textit{misaligned} or \textit{underspecified} cost functions that oversimplify real-world safety constraints, leading to overlooked risks. This limitation hinders deployment in high-risk domains where safety is crucial \cite{pan2022effects}. For instance, a risk-neutral autonomous robot may reach deadlock states, jeopardizing its mission and causing severe consequences \cite{beyer2014risk}. Addressing this requires integrating explicit safety constraints to prevent risky behavior while preserving essential exploration.
\begin{figure*}[htb]
\centering
\setlength{\fboxsep}{0pt}
\subfloat[Task space.]{%
    \label{fig:simenv}%
    \fbox{\includegraphics[width=.22\columnwidth]{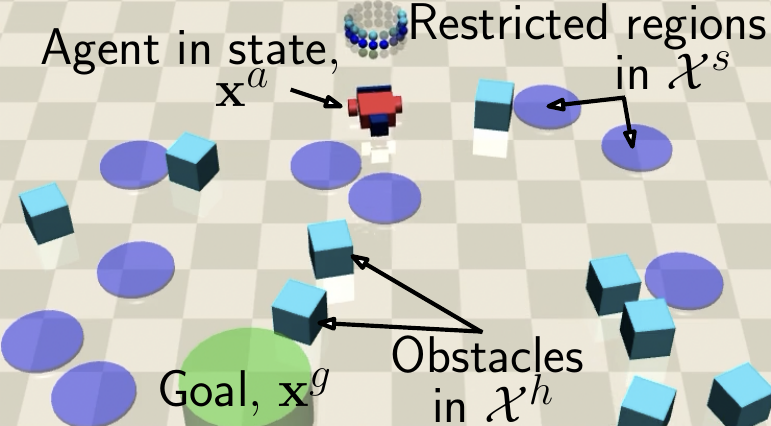}}} 
\hspace{0.05cm}
\subfloat[Unsafe traversal.]{%
    \label{fig:softconst}%
    \fbox{\includegraphics[width=.22\columnwidth, trim={0 0 0cm 0cm},clip]{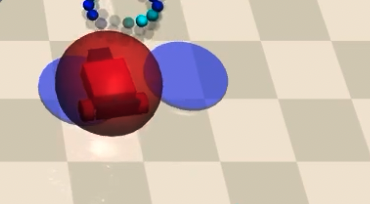}}} 
\hspace{0.05cm}
\subfloat[Collision event.]{%
    \label{fig:hardcon}%
 \fbox{\includegraphics[width=.22\columnwidth, trim={0 0 0cm 0cm},clip]{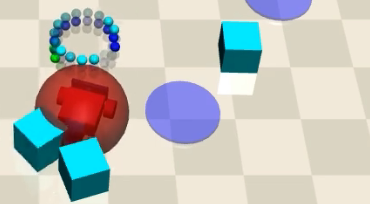}}}
\hspace{0.05cm}
\subfloat[Goal reached.]{%
    \fbox{\includegraphics[width=.22\columnwidth]{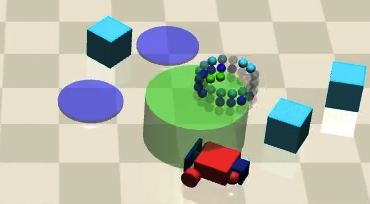}}}
\caption{%
{\textbf{Reach-avoid navigation task}: Close-ups from our experiments (\cref{sec:exp}) showing a differentially-driven mobile robot (depicted by a red car-like object) tasked with navigating to a uniformly randomized 2D goal location (green cylinder), denoted by \(\mathbf{x}^g = [x^g, y^g]^\top\), with \(x^g, y^g \in [-1, 1]\). En route to the goal, the robot must also avoid two regions within its environment: a traversable region, denoted by \(\mathcal{X}^s\) comprising purple discs shown in (b) (acting as a \emph{soft} constraint), as well as obstacles \newchangesnorev{in the set \(\mathcal{X}^h\)} (represented by light blue cubes in (c)) that inhibit its motion upon collision (i.e., a hard constraint). These static hazards and obstacles incur \emph{distinct} costs that the agent must minimize while learning to safely navigate to the goal.}}
\label{fig:learn-env}
\end{figure*}

Approximate Action-Value Iteration (hereafter, {\avi}) methods \cite{mnihPlayingAtariDeep2013,van2016deep} estimate the action-value function using a neural network trained on the temporal difference between a current estimate of the action value and an observed \textit{target} value (i.e., the sum of the stage cost and the expected future action value). However, in high-variance environments, these methods often suffer from overestimation bias, where the action-value approximation predicts values that are significantly higher than the true values \cite{kuznetsovControllingOverestimationBias2020}. {\avi} methods based on quantile regression \cite{kuznetsovControllingOverestimationBias2020,dabneyDistributionalReinforcementLearning2017} address this by learning a quantile distribution of Q-values instead of a single expected value, reducing overestimation and improving estimation accuracy. By updating value distributions across quantile fractions, quantile regression RL algorithms enhance exploration and stabilize policy learning \cite{kuznetsovControllingOverestimationBias2020}.

While distributional RL mitigates overestimation bias, it struggles with instantaneous stage cost misspecification, leading to unsafe or suboptimal policies \cite{pan2022effects}. Modifying stage costs to encode safety constraints \cite{tasseROSARLRewardOnlySafe2023,thomasSafeReinforcementLearning2021} is common but challenging to tune, especially in environments with varied risks \cite{farazi2021deep} (\cref{fig:learn-env}). Safe Policy Optimization \cite{ji2023safety} tackles this problem by enforcing safety through a separate network, which adds substantial computational overhead \cite{fakoorP3OPolicyonPolicyoff2020}. Furthermore, in dynamic environments, uncertainty from stochastic transitions, noise, perturbations, and model approximations can lead to brittle policies if not properly addressed \cite{yingSafeReinforcementLearning2022}. To balance instantaneous stage cost optimization, safety adherence, and uncertainty mitigation -- without the overhead of a separate cost network \cite{ji2023safety,fakoorP3OPolicyonPolicyoff2020} -- we propose an RL framework that integrates risk-sensitive decision-making into approximate action-value iteration algorithms. Specifically, our approach penalizes safety violations by augmenting the quantile loss with a risk term derived from experience-based cost distributions.

{\normalfont\textbf{Contributions and Paper Outline}}
\begin{enumerate}[label=\roman*.,leftmargin=*]
    \item \textit{Risk-regularized Quantile-Regression Based Approximate Action-Value Iteration ({\qravi})} (\cref{sec:approach}): We enhance efficiency and safety by training a single quantile-based action-value iteration using a risk-regularized loss that integrates quantile regression with a penalty for safety violations (\cref{ssec:riskm}).
    \item \textit{Empirical uncertainty quantification via Kernel Density Estimation} (\cref{ssec:uncqnt,ssec:kdecostdist}):
    We approximate the cost distribution using Kernel Density Estimation (KDE), providing  a probabilistic measure of safety constraints based on cost samples and enabling risk computation.
    \item \textit{Contraction \& Finite-Time Convergence Guarantees} (\cref{sec:thprop}): We prove the contraction of the risk-regularized Bellman operator, ensuring stable learning and finite-time convergence of the quantile-based action-value iteration algorithm to a stationary policy, leveraging the convexity of the quantile and risk loss terms.
    \item \textit{Modular RL Algorithmic Framework} (\cref{sec:approach}): We introduce a risk-sensitive QR-AVI algorithm (\cref{alg:raqravi}) within a flexible and modular framework that integrates safety constraints into the quantile loss, making our approach applicable to other approximate RL methods.
    \item \textit{Reach-Avoid Experiments (\cref{{sec:exp}}) \&  Comparative Study (\cref{ssec:benchm})}: We evaluate a car-like agent in a dynamic reach-avoid task, comparing our algorithm to nominal and risk-neutral variants. Results show improved safety, lower quantile loss, and higher success rates.
\end{enumerate}

\section{Background}
\label{sec:prelim}

\subsection{Reinforcement Learning (RL)}
\label{sec:RL}
The terminology in this subsection follow \cite{bertsekas_model_2024}. We consider an {\em agent} as a decision-making entity that learns from interacting with an external {\em environment} through a sequence of {\em observations} (of its states and those of the environment), {\em actions}, and {\em costs}. At each discrete time step \(t \in \mathbb{Z}_+\), the agent observes its current state as well as the current state of the environment, denoted for simplicity by the vector \(\mathbf{x}_t = [\mathbf{x}^a_{t}, \mathbf{x}^e_{t}]^{\top} \in \mathcal{X}=\mathcal{X}_a \times \mathcal{X}_e\), where \(\mathbf{x}^a_{t} \in \mathcal{X}_a\) represents the agent's state, and \(\mathbf{x}^e_{t} \in \mathcal{X}_e\) represents the environment’s state, i.e., every artifact within the environment outside the agent. The agent applies a control input \(\mathbf{u}_t \sim \mu\left(\cdot \mid \mathbf{x}^a_t\right) \in \mathcal{U}\) according to a (randomized) policy or control law \(\mu: \mathcal{X}_a \to \Delta(\mathcal{U})\), where \(\mathcal{U}\) denotes the control space representing the discrete set of permissible control inputs, \(\Delta(\mathcal{U})\) is the set of distributions over the control space, and \(\sum_{\mathbf{u}_t \in \mathcal{U}} \mu\left(\mathbf{u}_t \mid \mathbf{x}^a_t\right)=1, \ \forall \ \mathbf{x}^a_t \in \mathcal{X}_a\). The agent also incurs a stage cost, \(g_t = \phi(\mathbf{x}^a_t,\mathbf{u}_t)\in \mathbb{R}\), and transitions to a new state according to \(\mathbf{x}^a_{t+1} = f^a(\mathbf{x}^a_t, \mathbf{u}_t)\), where \(f^a\) is the state transition function.
We define the system in which the agent interacts with the environment by a Markov Decision Process (MDP), given by the tuple \(\mathcal{M} = (\mathcal{X}, \mathcal{U}, f^a, f^e, \phi, \gamma)\), where $\gamma$ is a discount factor for expected future costs. 

\begin{example}[\newchangesnorev{Running Example}]\label{ex:reachavoid}
    Consider the navigation task shown in \cref{fig:learn-env}. The robot's state space, $\mathcal{X}_a$, comprises its wheel orientation, angular velocity, linear velocity, and acceleration, goal and obstacle lidar measurements, and Boolean indicators for obstacle collision events (the robot's dynamics are outlined in \cref{sec:exp}). The control set, $\mathcal{U}$, is 2-dimensional and consists of the torques applied to the left and right wheels. The stage cost, $g_t \in \mathbb{R}$, is a function of the Euclidean distance between the robot and the prescribed goal location at each timestep (see \cref{sec:exp}). \newchangesnorev{The robot must also avoid a keep-off region, \(\mathcal{X}^s\) and an obstacle region, \(\mathcal{X}^h\), that both incur a violation cost distinct from \(g_t\).}
\end{example}

\begin{definition}[Safety]\label{def:safety}
\newchangesnorev{
A trajectory \(\{\mathbf{x}^a_t, \mathbf{u}_t\}_{t=0}^T\) is deemed {safe} if it avoids 
 environmental constraint violations, i.e., if
 \begin{equation}
     \mathbf{x}^a_t \notin \mathcal{X}_{\text{unsafe}}=\mathcal{X}^s\cup\mathcal{X}^h\equiv\mathcal{X}_a\setminus\mathcal{X}_{\text{safe}}, \ \forall \ t,
 \end{equation}
 where \(\mathcal{X}_{\text{unsafe}}\) denotes the set of agent states corresponding to collisions and restricted regions, and \(\mathcal{X}_{\text{safe}}\) is its complement.}
\end{definition}

\begin{remark}[Enforcing Safety]
\newchangesnorev{
To avoid the computational challenges associated with deriving an explicit expression for \(\mathcal{X}_{\text{safe}}\), we enforce safety via unique cost functions (specified for each safety constraint) that assign a positive scalar penalty whenever the agent enters an unsafe region:
\begin{equation}\label{eq:safetycost}
C_t:=  \begin{cases}
    c^s_t(\mathbf{x}^a_t) > 0, \ \text{if } \mathbf{x}^a_t \in \mathcal{X}^s, \ 0 \text{ otherwise, }\\
    c^h_t(\mathbf{x}^a_t) > 0, \ \text{if } \mathbf{x}^a_t \in \mathcal{X}^h, \ 0 \text{ otherwise. }
\end{cases}
\end{equation}
To implicitly bias the agent toward safe behavior, we then minimize the cumulative sum of the violation costs, with the costs defined as in \cref{eq:safetycost}. See \cref{eq:models} for the full cost expression.}
\end{remark}
\subsection{Action-Value Iteration (AVI)}\label{ssec:qravi}
In an MDP, the agent seeks a policy \(\mu\) that minimizes the expected cumulative cost. The optimal cost-to-go function satisfies \(J^\star(\mathbf{x}) = \min _{\mathbf{u}_t} Q^{\star}\left(\mathbf{x}_t, \mathbf{u}_t\right),\)
where \( Q^\star(\mathbf{x}_t, \mathbf{u}_t)\) is the optimal action-value function. The action-value function \( Q(\mathbf{x}_t, \mathbf{u}_t) \), which estimates the future cost from a given state-action pair, is updated iteratively using the Bellman recursion
\begin{align}\label{eq:bellupdate}
   Q\left(\mathbf{x}_t, \mathbf{u}_t\right) \leftarrow & 
 \ Q\left(\mathbf{x}_t, \mathbf{u}_t\right) + \alpha^t \big[\phi\left(\mathbf{x}_t, \mathbf{u}_t\right) \nonumber\\
   &+ \gamma \min _{\mathbf{u}_{t+1}} Q\left(\mathbf{x}_{t+1}, \mathbf{u}_{t+1}\right) - Q\left(\mathbf{x}_t, \mathbf{u}_t\right) \big],
\end{align}  
where \(\phi(\mathbf{x}_t, \mathbf{u}_t)\) is the stage cost, \(\alpha^t \in (0,1]\) is the learning rate, and \(\gamma \in (0,1]\) is a discount factor. Since maintaining exact action values for all state-control pairs in large state-control spaces is impractical, we specialize our discussion to approximate value iteration methods in the next section.

\subsection{Approximate AVI via Quantile Regression}
{Approximate} action-value iteration methods replace the tabular representation with a function approximator, such as a \(\theta\)-parameterized neural network, i.e., \(Q(\mathbf{x}_t, \mathbf{u}_t) \approx Q(\mathbf{x}_t, \mathbf{u}_t; {\theta})\), that is typically trained using data sampled uniformly from a sequence of state-control pairs, $\mc{D}$, containing tuples of the form $\{\left(\mathbf{x}_t, \mathbf{u}_t, {g}_t, \mathbf{x}_{t+1}, d_t\right)\}$, where \(d_t\) is a Boolean indicator variable that signifies if a learning episode has ended. The predicted Q-values (denoted by \(Q(\mathbf{x}_t, \mathbf{u}_t; \theta)\)) are updated by minimizing the loss function defined by
\begin{equation}\label{eq:lossavi}
\mathcal{L}(\theta_i) = \mathbb{E} \big[\left(y_i - Q(\mathbf{x}_t, \mathbf{u}_t); \theta_i \right)^2\big], \ i = 1, 2, \ldots, B,
\end{equation}
where $B$ is the size of the batch sampled from \(\mathcal{D}\), \(y_i\) is the \(\ith\) target action-value realized from a target network, \(Q(\mathbf{x}_t, \mathbf{u}_t; \theta_{i})\), with parameters, \(\theta_{i}\) that are updated with a predetermined frequency to converge at a stable or low-variance value: \(y_i := g_t + \gamma \min_{\mathbf{u}_{t+1}} Q(\mathbf{x}_{t+1}, \mathbf{u}_{t+1}; \theta_{i})\).
To account for variability in the Q-value distribution, quantile regression-based approximate AVI ({\qravi}) methods \cite{kuznetsovControllingOverestimationBias2020,dabneyDistributionalReinforcementLearning2017} estimate the target action-value distribution using a set of quantiles \newchangesnorev{(e.g., the lower tail, median, upper tail etc.) instead of predicting just the mean Q-value}. Specifically, in the {\qravi} framework \cite{dabneyDistributionalReinforcementLearning2017}, the loss in \cref{eq:lossavi} is replaced by the quantile regression loss:
\begin{equation}\label{eq:quantloss}
\mathcal{L}_{\text{QR}}(\hat{\theta}_n) := \frac{1}{\numquantiles} \displaystyle\sum_{n=1}^{\numquantiles} \sum_{j=1}^{\numquantiles} \rho^\kappa_{\tau_n}\big[y_j - \hat{\theta}_n(\mathbf{x}_{t}, \mathbf{u}_{t})\big].
\end{equation} 
In \cref{eq:quantloss}, \(y_j = g_t + \gamma \hat{\theta}_j(\mathbf{x}_{t+1}, \mathbf{u}_{t+1})\)
defines the \(\jth\) quantile (of the target action-value distribution) that depends on the system's (agent-environment) transition dynamics and stage cost, with \(j \in \{1, \ldots, N_\tau\}\), \(\numquantiles\) is the prescribed number of quantile fractions, and \(\rho^\kappa_{\tau_n}\) is the asymmetric Huber loss (with parameter \(\kappa\)) corresponding to the uniformly-distributed quantile fraction, \(\tau_n \in (0,1)\) (\(\delta_{w}\) is the Dirac delta function at \(w\in\mathbb{R}\) and \(n = 1, 2, \dots, \numquantiles\)):
\begin{align}
\rho^\kappa_{\tau_n}(m) = |\tau_n - \delta_{\{m < 0\}}|\mathcal{L}_\kappa(m), \ \tau_n = \nicefrac{(n - 0.5)}{\numquantiles}\\
\mathcal{L}_\kappa(m) = \bigg\{
\begin{array}{ll}
\frac{1}{2} m^2, & \text{if } |m| \leq \kappa, \\
\kappa\left(|m| - \frac{1}{2} \kappa\right), & \text{otherwise}.
\end{array}
\end{align}
\begin{remark}[Parameters of \(\mathcal{L}_{\text{QR}}\)]
    The Huber loss function, \(\rho_{\tau_n}^\kappa\), in \cref{eq:quantloss} serves to minimize the action-value distribution approximation error for the \(n\)-th quantile by combining the squared and absolute errors and switching between these errors based on a threshold parameter, \(\kappa\). This switching strategy makes the Huber loss robust to outliers \cite{dabneyDistributionalReinforcementLearning2017}, which is an important requirement for real-world control applications with noisy or uncertain data. The Dirac function adjusts the magnitude of the error based on the sign of the desired cost-to-go (denoted by $\mathrm{m}$) to differentiate between underestimation and overestimation of the action value distribution.
\end{remark}

\subsection{Data-Driven Uncertainty Quantification}\label{ssec:uncqnt}
At each state, \(\mathbf{x}^a_t\), since the agent can choose from multiple possible actions \(\mathbf{u}_t \in \mathcal{U}\) according to a random policy, each random action thus incurs a non-deterministic stage cost given by \(\phi(\mathbf{x}^a_t, \mathbf{u}_t)\), as well as a random safety violation cost not captured by the stage cost function (see \cref{ex:reachavoid} and \cref{def:safety}). \newchangesnorev{We thus model the randomness in the safety violation costs \textit{separately} by assuming \(C_t\) follows an unknown distribution with probability density, \(P_c\), that must be learned from cost samples in the sampled batch, \(\{C_t^{(1)}, C_t^{(2)}, \ldots, C_t^{(|B|)}\}\), for each time step, where \(C_t^{(e)}\) is the violation cost incurred at time step \(t\) in episode \(e\). Due to the violation cost's dependence on the policy, the associated cost distribution at \(\mathbf{x}_t\) is thus conditioned on \(\mu\). We define the notion of a policy-conditioned cost distribution next.}

\begin{definition}[Policy-Conditioned Cost Distribution]\label{def:picostdist}
\noindent Given a policy, \(\mu\), and a cost distribution, \(P_c\), the \textit{policy-conditioned} cost distribution at state \(\mathbf{x}_t\) (hereafter denoted by \(Z_c^\mu(\cdot \mid \mathbf{x}^a_t)\)) is the cost distribution obtained by marginalizing over inputs
\begin{equation}\label{eq:picostdist}
\sum_{\mathbf{u}_t \in \mathcal{U}} \mu\left(\mathbf{u}_t \mid \mathbf{x}^a_t\right) P_c\left(\cdot \mid \mathbf{x}^a_t, \mathbf{u}_t\right),
\end{equation}
where \(P_c\left(\cdot \mid \mathbf{x}^a_t, \mathbf{u}_t\right)\) is a distribution of safety violation costs associated with the state control pair, \(\left(\mathbf{x}^a_t, \mathbf{u}_t\right)\). For instance, if \(C_t\) is the cost incurred from violating a safety constraint, \(P_c\left(C_t\mid \mathbf{x}^a_t, \mathbf{u}_t\right)\) provides the likelihood of incurring a particular cost \(C_t\) given the pair, \(\left(\mathbf{x}^a_t, \mathbf{u}_t\right)\).
\end{definition}

\begin{remark}[Challenges with Computing \(Z_c^\mu(\cdot\mid \mathbf{x}^a_t)\)]\label{rem:compcdist}%
Equation \cref{eq:picostdist} provides a formal framework that fully captures the cost distribution for a given state and under control inputs chosen according to a policy. Unfortunately, due to the unavailability of an exact form of \(P_c(\cdot \mid \mathbf{x}^a_t, \mathbf{u}_t)\), a direct computation of a closed form expression for \(Z_c^\mu(\cdot\mid \mathbf{x}^a_t)\) becomes infeasible. This challenge thus necessitates the need for an approximation of \(Z_c^\mu(\cdot\mid \mathbf{x}^a_t)\), hereafter denoted as \(\hat{Z}^{\mu}_c\), from costs sampled from stored experience during training.
\end{remark}

\subsection{Risk Model \& Risk Measure Computation}\label{ssec:riskmod}
\begin{definition}[Risk Measure]\label{def:cbrisk}\noindent Consider the probability space, \((\Omega, \mathcal{F}, \hat{Z}^{\mu}_c)\), where \(\Omega \supseteq \mathcal{X}\times \mathcal{U}\) is the sample space and \(\mathcal{F}\) is the \(\sigma\)-algebra over \(\Omega\). Suppose \(\mathscr{C}\) denotes the set of all cost random variables defined on \(\Omega\). Omitting the time argument for brevity, we define the risk measure as a function \(\hat{\rho}_\beta: \mathscr{C} \to \mathbb{R}\) that assigns a real number representing the \textit{risk} or variability of \(C\), i.e., \(\hat{\rho}_\beta\) captures the tail behavior of the cost distribution; \(\beta \in (0, 1)\) is the \textit{confidence level} representing the proportion of \(\hat{Z}^{\mu}_c\) that is covered when computing \(\hat{\rho}_\beta\).
\end{definition}
Hereafter, we focus on {\em coherent} risk measures, widely used in risk-sensitive optimization, that satisfy key axioms: sub-additivity, positive homogeneity, translation invariance, monotonicity, and risk-utility duality \cite{artzner_coherent_1999,nass2019entropic, rockafellar2020risk}. Of these, we select the Conditional Value-at-Risk (CVaR), since it can distinguish between tail events \cite{artzner_coherent_1999} beyond \(\beta\). Given the cost distribution \(\hat{Z}^{\mu}_c\) realized from some estimation scheme (see \cref{ssec:kdecostdist}), the expected cost at time step \(t\) is:
\begin{equation}\label{eq:expzdist}
    \mathbb{E}\left[C_t\right]=\frac{1}{B} \sum_{e=1}^B C_t^{(e)}.
\end{equation}
Using \cref{eq:expzdist}, and specializing to the case where \(\hat{\rho}_\beta\) is the \(\beta\)-level CVaR (\(\operatorname{CVaR}_\beta\)), the risk measure is given by
\begin{align}\label{eq:cvarbetaform}
    \mathrm{CVaR}_\beta\left(C_t\right)&=\mathbb{E}\left[C_t \mid C_t \geq \operatorname{VaR}_\beta\left(C_t\right)\right], \text{ where}\\
    \mathrm{VaR}_\beta\left(C_t\right) &= \inf_{\star}\{\star \in \mathbb{R} \mid \hat{Z}^{\mu}_c(C_t\le\star) \geq \beta\}
\end{align}
is the Value-at-Risk or \(\beta^{th}\) percentile of \(\hat{Z}^{\mu}_c\).

\section{Problem Formulation}\label{sec:prbform}
Assume the setting in \cref{sec:RL}. Given an MDP representing the system, we wish to find a policy that minimizes the expected cumulative cost while minimizing the risk of safety constraint violations, that is, the \textit{risk-sensitive} cumulative cost, where the risk is computed from an estimated (constraint-violation) cost distribution. Formally, we consider the following problem:
\begin{problem}\label{problem:main}
Given an MDP (\cref{sec:RL}), a stage cost function, \(\phi\), a cost distribution, \(\hat{Z}^{\mu}_c\), and an initial state, \(\mathbf{x}_0\), find a policy \(\mu^*\) such that:
\begin{align}\label{eq:prbmain}
\mu^* = \arg \min_\mu \mathbb{E}\left[\sum_{t=0}^{T-1} \alpha^t \phi(\mathbf{x}_t, \mathbf{u}_t)\right] +  \lambda[\hat{\rho}_\beta\big(\sum_{t=0}^{T-1}C_t\big)].
\end{align}
\end{problem}%
In \cref{eq:prbmain}, \(\hat{\rho}_\beta\) denotes the \(\beta\)-level CVaR corresponding to \(\hat{Z}^{\mu}_c\), estimated from cost samples collected during each episode, \(\lambda\) is a positive scalar regularization parameter that balances the cumulative stage cost minimization and risk minimization, and \(C_t \sim \hat{Z}^{\mu}_c\) is the violation cost random variable. To solve \cref{problem:main}, we propose a risk-sensitive QR-AVI algorithm, with the risk computation enabled by Kernel Density Estimation (KDE). In \cref{ssec:costdistapx}, we provide a more application-centered argument in support of our choice of KDE, while \cref{sec:approach} (appearing next) describes the specifics of our approach that draw on recent advances in risk-sensitive RL \cite{yingSafeReinforcementLearning2022,mavridis2022risk,noorani2021risk-a,noorani2021risk-b,noorani2022risk,chowRiskConstrainedReinforcementLearning2017}.

\noindent\begin{minipage}{\linewidth}
\vspace{-3pt}
\centering
\includegraphics[width=\linewidth,height=2in]{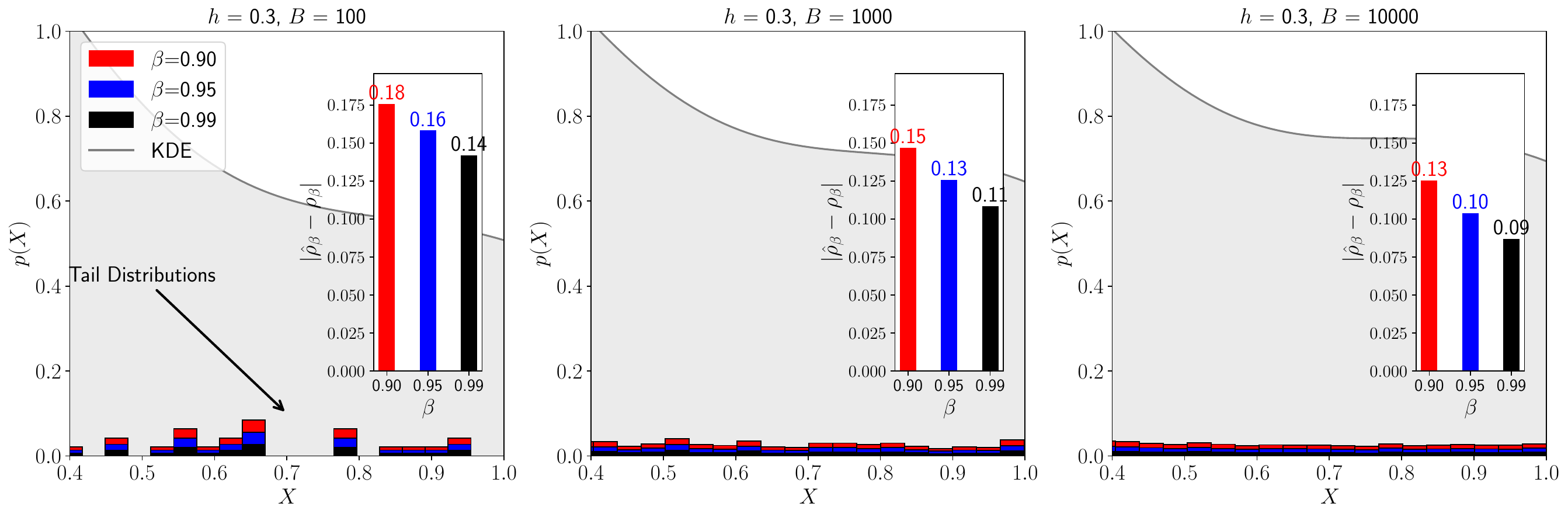}
\captionof{figure}{{Graphing the KDE-estimated probability density (\(p(X)\), shown as a gray line enclosing the gray-filled area) corresponding to samples of an uncertain variable (\(X\)). The inset bar plots show the evolution of the absolute error, \(|\hat{\rho}_{\beta} - {\rho}_{\beta}|\), between the CVaR computed from \(p(X)\) (i.e., \(\hat{\rho}_\beta\)) and the true CVaR (\({\rho}_\beta\)) for an increasing number of KDE samples, \(B = \{100, 1000, 10000\}\) and using a Gaussian kernel with a fixed bandwidth (\(h\)) of 0.3. The underlying data are from a  heavy tailed distribution on \((0, 1]\), and the CVaR estimate moderately improves with the number of cost samples.}}
\label{fig:kdebetaest}
\vspace{-6pt}
\end{minipage}
\section{Approach: Risk-Regularized Quantile-Based AVI with KDE-Based Cost Distribution Estimation}\label{sec:approach}
\subsection[KDE-Based Estimation of the Policy-Constrained Cost Distribution]{KDE-Based Estimation of \(Z_c^\mu\)}\label{ssec:kdecostdist}
Following the discussions in \cref{ssec:uncqnt} and for reasons outlined in \cref{rem:compcdist}, we employ Kernel Density Estimation (KDE) \cite{silverman_density_2017}, a technique that provides a practical method to approximate the policy-conditioned cost distribution (see \cref{def:picostdist}) from finite training samples. By leveraging observed costs during policy execution, KDE constructs an empirical distribution, \(\hat{Z}^{\mu}_c\) \cref{eq:kdecost}, as a proxy for \(Z_c^\mu\), that enables the computation of distributional statistics essential for risk-sensitive decision-making. Denoting the cost samples observed from the training data at time step \(t\) by \(\mathcal{C}_t:=\{{C_t^{(i)}}\}_{i=1}^B\), we can write the likelihood of \(C_t\) as follows (\(k\) is the kernel function, and \(h > 0\) is the bandwidth):%
\begin{equation}\label{eq:kdecost}
\hat{Z}^{\mu}_c(C_t \mid \mathbf{x}_{t})=\frac{1}{Bh} \sum_{i=1}^B k\left(\frac{C_t-C_t^{(i)}}{h}\right), \ C_t^{(i)} \in \mathcal{C}_t.
\end{equation}%

\begin{remark}[KDE Caveats]
    While KDE approximates \(Z_c^\mu\) by smoothing over observed cost values, thus capturing both aleatoric uncertainty (due to the inherent stochasticity in the costs) and epistemic uncertainty (due to finite cost samples) as \newchangesnorev{argued} in \cite{hullermeierAleatoricEpistemicUncertainty2021}, its estimation accuracy (depends on and) may improve albeit marginally with the number of samples \cite{chen_tutorial_2017}. For instance, limited experiments with a heavy-tailed distribution (see \cref{fig:kdebetaest}) reveal that increasing \(B\) hundredfold yields only a 5\% improvement in estimation accuracy.
\end{remark}%

\subsection{Quantile Regression with Risk-Sensitive Loss Functions}\label{ssec:rslossf}
As noted in \cref{sec:intro}, encoding safety with a monolithic cost function has limitations. To better capture safety risks, we propose the following risk-regularized loss function (\(\mathcal{L}\)) for {\qravi} (with \(\lambda \in (0,1)\)):%
\begingroup
\setlength{\abovedisplayskip}{0pt}
\setlength{\belowdisplayskip}{0pt}
\begin{equation}\label{eq:qrriskloss}
\mathcal{L}(\theta)=(1-\lambda)\mathcal{L}_{QR}(\theta)+\lambda \mc{L}_{\rho}(\hat{\rho}_\beta(C), c_{\mathrm{max}}),
\end{equation}
\endgroup
where \(\mc{L}_{QR}(\cdot)\) is the quantile regression loss given by \cref{eq:quantloss}, \(\mc{L}_{\rho}(\cdot)\) is the loss corresponding to the risk (i.e., \(\hat{\rho}_\beta(C)\); see \cref{eq:cvarbetaform}) computed over the cost distribution in \cref{eq:kdecost}, and \(c_{\mathrm{max}}\) is a positive cost threshold. By applying KDE on the observed cost samples \(\mathcal{C}\), we can compute \(\hat{\rho}_\beta(C)\) with respect to \(\hat{Z}^{\mu}_c\). 
\subsection{Risk-Sensitive Approximate Value Iteration with Learned Constraint-Violation Cost Distribution}\label{ssec:riskm}
With the KDE-estimated cost distribution, we construct the risk-sensitive loss function in \cref{eq:qrriskloss} and train a function approximator that approximates the distribution of action-values using quantile regression. In the cost-minimizing context, two typical scenarios result from incorporating risk sensitivity in AVI, depending on \(\beta\)'s value: \(\beta\) values on \([0.9, 1)\) correspond to a \textit{risk-averse} setting, while \(\beta = 0.5\) corresponds to the \textit{risk-neutral} setting, since the CVaR reduces to the expected value of the distribution \cite{rockafellar2020risk}.

Accordingly, to distinguish between these settings in our {\qravi} algorithm, each corresponding to a different instance of \(\mathcal{L}\) (see \cref{tab:loss-functions}), we adopt the respective abbreviations: ${\eqravi}$ and ${\rsqravi}$. Our algorithm (\cref{alg:raqravi}) trains a risk-sensitive function approximator for an action-value distribution by incorporating KDE-based risk estimation into the learning process. The key step that enables the computation of the risk loss is the storing of state-action-\textit{cost} transitions in the replay buffer (\cref{algl:alg:sarcs} of \cref{alg:raqravi}). To update the Q-network, we compute the risk loss using the KDE-estimated distribution of costs sampled from experience.

\begin{minipage}[t]{\textwidth}
\begin{minipage}[t]{0.5\textwidth}
     \removelatexerror
        \begin{algorithm}[H]
            \DontPrintSemicolon
            \LinesNumbered
            \SetNlSty{texttt}{}{}
            \SetAlgoNlRelativeSize{-2.4}
            \SetNlSkip{0.5em}
            \caption{Risk-Sensitive Approximate Value Iteration}
            \label{alg:raqravi}
            
            \KwIn{\(T, \mathcal{D}, B, \gamma, \beta, \lambda, \kappa, \{\epsilon_t\}^{T}_{t=1}, \eta, {\numquantiles}, c_{\mathrm{max}}\)}
            \KwOut{\(Q_\theta\)}
            
            Initialize \(Q_{\theta}, Q_{\theta'}\), \(\mathcal{D}\), \(\{\tau_n\}_{n=1}^{{\numquantiles}},\,d_t={\texttt{\small false}}\)\;
            Observe state $\mathbf{x}_{0}$\;

            \For{$t = 0$ \textbf{to} $T$}{
                
                \While(\tcp*[h]{\(\mathbf{x}_{t+1}\) is not terminal}){\(d_t \neq \text{\normalfont \small \texttt{true}}\)}{
                    \(\mathbf{u}_t \sim \epsilon_t\)-greedy\;
                    
                    Execute \(\mathbf{u}_{t}\), compute \(g_t\), observe \(\mc{C}_t, \mathbf{x}_{t+1}, d_t\)\;
                    
                    \ShowLn\label{algl:alg:sarcs}Store \((\mathbf{x}_t, \mathbf{u}_t, g_t, \mc{C}_t, \mathbf{x}_{t+1}, d_t)\) in \(\mathcal{D}\)\;\nllabel{alg:line:sarcsa}
                    \If{\(|\mathcal{D}| \ge B\)}{
                        Sample batch \(B\) from \(\mathcal{D}\)\;\label{alg:line:batch1}
                        
                        \For{$(\mathbf{x}_{t,j}, \mathbf{u}_{t,j}, {g}_{t,j}, \mc{C}_{t,j}, \mathbf{x}_{t+1,j}, d_{j,t})$ in $B$}{
                            \(Q_{target} = g_{t,j} + \gamma (1 - d_{j,t}) \min_{\mathbf{u}_{t+1}} \mathbb{E}[\hat{\theta}(\mathbf{x}_{t+1,j}, \mathbf{u}_{t+1}, \tau)]\)\;\label{alg:line:batch2}}
                        {\textbf{UpdateNetwork}}(\(Q_{target}, \mc{C}_{t,j}, \texttt{\small args*}\))\;
                        \(\theta' \gets \eta \theta + (1 - \eta) \theta'\)\ \tcp*[h]{Update target}\;
                       }
                   }
               }
        \end{algorithm}
\end{minipage}
\hspace{0.2cm}
\begin{minipage}[t]{0.485\textwidth}
    \centering
    \captionof{table}{Training Loss Functions ($C \sim \hat{Z}^{\mu}_c$). Contributions in {\color{blue}blue}.}
    \label{tab:loss-functions}
    \resizebox{\textwidth}{!}{%
    \begin{tabular}{L{1.7cm}L{6cm}} 
    \toprule 
    {\textbf{Algorithm}} & {\textbf{Loss Function}} ($\mathcal{L}$) \\
    \midrule 
    {\avi} & $(y - Q(\mathbf{x}, \mathbf{u}))^2$\\{\qravi} & $\newchangesnorev{\mathcal{L}_{\text{QR}}(\hat{\theta}_n) := \frac{1}{\numquantiles} \displaystyle\sum_{n=1}^{\numquantiles} \sum_{j=1}^{\numquantiles} \rho^\kappa_{\tau_n}\big[y_j - \hat{\theta}_n\big]}$\\
    {${\eqravi}$} & $\textcolor{blue}{\lambda^\prime}\mathcal{L}_{\text{QR}} + \textcolor{blue}{\lambda [(\mathbb{E}[C] - c_{\mathrm{max}})^2]_+}$ (see \cref{eq:expzdist}) \\ 
    {${\rsqravi}$} & $\textcolor{blue}{\lambda^\prime}\mathcal{L}_{\text{QR}} + \textcolor{blue}{\lambda [({\hat{\rho}}_{\beta}(C) - c_{\mathrm{max}})^2]_+}$ (see \cref{eq:cvarbetaform})\\ 
    \bottomrule
    \end{tabular}%
    }
\end{minipage}
\end{minipage}

\section{Theoretical Guarantees for Risk-Sensitive {\qravi}: Contraction, Fixed-Point Existence, \& Finite-Time Convergence}\label{sec:thprop}%
\begin{definition}[Risk-Sensitive Distributional Bellman Operator \cite{lim_distributional_2022}]\label{def:rsbellop}%
Let \( Z \in \mathcal{Z} \) denote the expected cumulative cost distribution, with \( \mathcal{Z} \) given as \( \mathcal{Z} = \{ Z \mid \mathbb{E}[Z(\mathbf{x}, \mathbf{u})] < \infty, \forall\ 
 \mathbf{x}, \mathbf{u}\}\). Let \(\mathcal{Z}_c:=\{ Z^\mu_c \mid \mathbb{E}[Z^\mu_c(\mathbf{x}, \mathbf{u})] < \infty, \forall \ \mathbf{x}, \mathbf{u}\}\). Suppose \(\hat{Z}^{\mu}_c \in \mathcal{Z}_c\) and \(\hat{\rho}_\beta(\hat{Z}^{\mu}_c) \in \mathfrak{C} := \{C|C(\mathbf{x}^a) < \infty, \forall \ \mathbf{x}^a\}\), with \(\hat{\rho}_\beta[\hat{Z}^{\mu}_c](\mathbf{x}^a):=\hat{\rho}_\beta(\hat{Z}^{\mu}_c[\cdot|\mathbf{x}^a])\). The risk-sensitive distributional Bellman operator, \(\mathcal{T}^{\beta}\), corresponding to a \( \beta\)-level coherent risk measure, \(\hat{\rho}_\beta \) (see \cref{def:cbrisk}), is
\begin{equation}\label{eq:rsbellop}
   \mathcal{T}^\beta{Z}(\mathbf{x}, \mathbf{u}): \stackrel{D}{=} {g}_t - \gamma \hat{\rho}_\beta[Z^{\mu}_c(\mathbf{x}_{t+1}, \mathbf{u}_{t+1})].
\end{equation}
\(\overset{D}{:=}\) denotes distributional equivalence, and \(\mathbf{u}_{t+1}\sim \mu(\cdot|\mathbf{x}_{t+1})\).
\end{definition}

\begin{theorem}[Contraction of the Risk-Sensitive Bellman Operator with Cost-Based Risk Regularization]\label{prp:ctrbell}%
Assume the setting in \cref{def:rsbellop}. Let \(Z_1\) and \(Z_2\) be two cumulative cost distributions, and suppose that \(\hat{\rho}_\beta\) is \textit{non-expansive}, i.e., that it satisfies the following relationship for two random costs, \(C_1 \sim \hat{Z}_{1,c}^{\mu}
 \in \mathcal{Z}_c \) and \(C_2 \sim \hat{Z}_{2,c}^{\mu} \in \mathcal{Z}_c\) (with \(\hat{Z}_{1,c}^{\mu} \neq \hat{Z}_{2,c}^{\mu}\): \(
|\hat{\rho}_\beta(C_1)-\hat{\rho}_\beta(C_2)| \leq\|C_1-C_2\|.\)
Let \(\hat{Z}^{\mu}_c\) be related to ${Z}$ by $\hat{Z}^{\mu}_c(\mathbf{u}|\mathbf{x}^a) = \Psi({Z(\mathbf{x}, \mathbf{u})})$, where \(\Psi: \mathbb{R}\rightarrow \mathbb{R}_{+}\) is a coherent, Lipschitz continuous, and bounded function with Lipschitz constant, \(L \in (0,1)\). Then $\mathcal{T}^{\beta}$ is a $\gamma$-contraction in the Wasserstein-1 (\(W_1\)) metric (see \cite{dabneyDistributionalReinforcementLearning2017}), i.e.,
\(W_1(\mathcal{T}^{\beta} {Z}_1, \mathcal{T}^{\beta} {Z}_2) \leq \gamma W_1({Z}_1, {Z}_2)\).
\end{theorem}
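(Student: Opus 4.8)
The plan is to lean on the three properties that make distributional Bellman operators $W_1$-contractions: (i) on $\mathbb{R}$, $W_1$ is invariant under adding a constant and scales by $|a|$ under multiplication by a scalar $a$ (and $x\mapsto -x$ is a $W_1$-isometry); (ii) the coupling formula $W_1(\nu_1,\nu_2)=\inf_\pi \mathbb{E}_{(X,Y)\sim\pi}|X-Y|$, so that any coupling is an upper bound; and (iii) pushing forward by a $1$-Lipschitz map is non-expansive in $W_1$. Throughout, I read $W_1(Z_1,Z_2)$ as the supremal metric $\sup_{\mathbf{x},\mathbf{u}}W_1\big(Z_1(\mathbf{x},\mathbf{u}),Z_2(\mathbf{x},\mathbf{u})\big)$, as in the distributional-RL references cited in the statement, and I take $\Psi_\#$ to mean pushforward by $\Psi$, so $\hat Z^\mu_{i,c}=\Psi_\# Z_i$.

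First I would fix a pair $(\mathbf{x},\mathbf{u})$ and unfold the operator: $\mathcal{T}^\beta Z_i(\mathbf{x},\mathbf{u})\overset{D}{=} g_t-\gamma\,\hat\rho_\beta[\,\Psi_\# Z_i(\mathbf{x}_{t+1},\mathbf{u}_{t+1})\,]$, where $g_t=\phi(\mathbf{x},\mathbf{u})$ is deterministic given $(\mathbf{x},\mathbf{u})$ and $\mathbf{x}_{t+1}\sim f^e(\mathbf{x}^e_t)$, $\mathbf{u}_{t+1}\sim\mu(\cdot\mid\mathbf{x}_{t+1})$. By (i), the common deterministic shift $g_t$ cancels and the factor $-\gamma$ comes out as $\gamma$:
\[ W_1\big(\mathcal{T}^\beta Z_1(\mathbf{x},\mathbf{u}),\,\mathcal{T}^\beta Z_2(\mathbf{x},\mathbf{u})\big)=\gamma\,W_1\big(R_1,\,R_2\big),\qquad R_i:=\hat\rho_\beta[\,\Psi_\# Z_i(\mathbf{x}_{t+1},\mathbf{u}_{t+1})\,], \]
where each $R_i$ is a real-valued random variable over the transition randomness — the risk measure collapses the inner cost distribution to a scalar for each realized next pair $(\mathbf{x}_{t+1},\mathbf{u}_{t+1})$.

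Next I would couple $R_1$ and $R_2$ through the identity on $(\mathbf{x}_{t+1},\mathbf{u}_{t+1})$, i.e. evaluate both at the same sampled next pair, so that by (ii) $W_1(R_1,R_2)\le\mathbb{E}_{(\mathbf{x}_{t+1},\mathbf{u}_{t+1})}\big[\,|R_1-R_2|\,\big]$. Holding $(\mathbf{x}_{t+1},\mathbf{u}_{t+1})$ fixed, the non-expansiveness hypothesis gives $|R_1-R_2|\le\|C_1-C_2\|$ with $C_i\sim\Psi_\# Z_i(\mathbf{x}_{t+1},\mathbf{u}_{t+1})$; reading $\|C_1-C_2\|$ as $W_1$ between these cost laws and invoking (iii) with $L\le 1$ — a coupling of $Z_1(\mathbf{x}_{t+1},\mathbf{u}_{t+1})$ and $Z_2(\mathbf{x}_{t+1},\mathbf{u}_{t+1})$ pushed through $\Psi$ witnesses $W_1(\Psi_\# Z_1,\Psi_\# Z_2)\le L\,W_1(Z_1,Z_2)\le W_1(Z_1,Z_2)$ — yields $|R_1-R_2|\le W_1\big(Z_1(\mathbf{x}_{t+1},\mathbf{u}_{t+1}),Z_2(\mathbf{x}_{t+1},\mathbf{u}_{t+1})\big)\le\sup_{\mathbf{x},\mathbf{u}}W_1\big(Z_1(\mathbf{x},\mathbf{u}),Z_2(\mathbf{x},\mathbf{u})\big)$. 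This last bound is constant in $(\mathbf{x}_{t+1},\mathbf{u}_{t+1})$, so it survives the expectation, and taking the supremum over $(\mathbf{x},\mathbf{u})$ on the left gives $W_1(\mathcal{T}^\beta Z_1,\mathcal{T}^\beta Z_2)\le\gamma\,W_1(Z_1,Z_2)$. I would also dispatch the routine well-posedness check that $\mathcal{T}^\beta$ maps $\mathcal{Z}$ into itself — boundedness of $\Psi$, finiteness of $\hat\rho_\beta$ on $\mathfrak{C}$, $\mathbb{E}[Z_i]<\infty$, and the finite shift by $g_t$ — so every $W_1$ distance above is finite.

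The step I expect to be the main obstacle is making the chain of metrics genuinely consistent at the non-expansiveness hypothesis: the argument needs $\|C_1-C_2\|$ to equal (or dominate) the $W_1$ distance between the two cost laws, so that the $1$-Lipschitz pushforward through $\Psi$ closes the loop back to $W_1(Z_1,Z_2)$; if the intended reading is a coupling-dependent $L^1$ norm instead, I would transport the optimal plan for $\big(Z_1(\mathbf{x}_{t+1},\mathbf{u}_{t+1}),Z_2(\mathbf{x}_{t+1},\mathbf{u}_{t+1})\big)$ forward through $\Psi$ and evaluate $\hat\rho_\beta$ along that coupling. A secondary subtlety is bookkeeping which property of $\Psi$ each step consumes: only $L$-Lipschitz with $L\le 1$ and boundedness enter the contraction estimate, while "coherent"/monotone is what keeps $\Psi_\#$ compatible with the risk ordering so that $\hat\rho_\beta\circ\Psi_\#$ behaves; I would state this explicitly to avoid over-claiming. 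The remaining ingredients — translation invariance, positive homogeneity, the reflection isometry, and the Lipschitz data-processing inequality for $W_1$ — are standard and citable from the distributional-RL literature referenced in the statement.
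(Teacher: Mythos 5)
Your proposal is correct and follows essentially the same chain as the paper's proof: cancel the deterministic stage cost by translation invariance of $W_1$, factor out $\gamma$, apply the assumed non-expansiveness of $\hat{\rho}_\beta$, and close the loop via the $L\le 1$ Lipschitz pushforward through $\Psi$. Your write-up is in fact more careful at exactly the points you flagged---the paper leaves the supremal metric and the coupling over the transition randomness implicit, and it inserts a redundant ``boundedness of $\mathcal{Z}_c$'' step and a ``set $\gamma = L$'' identification that your argument correctly does without.
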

\begin{proof}
Using \cref{eq:rsbellop}, we can write:
\begin{equation} \label{eq:bellop}
\mathcal{T}^{\beta} Z(x, u) \overset{D}{=} \phi(\mathbf{x}, \mathbf{u}) - \gamma \hat{\rho}_\beta(Z_c^\mu(\mathbf{x}_{t+1}, \mathbf{u}_{t+1})),
\end{equation}
so that by applying \(\mathcal{T}^{\beta} \) to \(Z_1\) and \(Z_2\), we obtain:
\begin{equation} \label{eq:topapp}
\begin{array}{rl}
    \mathcal{T}^{\beta} Z_1 &= \phi(\mathbf{x}, \mathbf{u}) - \gamma \hat{\rho}_\beta(\hat{Z}_{1,c}^\mu), \\
    \mathcal{T}^{\beta} Z_2 &= \phi(\mathbf{x}, \mathbf{u}) - \gamma \hat{\rho}_\beta(\hat{Z}_{2,c}^\mu).
\end{array}
\end{equation}
Taking \(W_1\), we obtain \(W_1(\mathcal{T}^{\beta} Z_1, \mathcal{T}^{\beta} Z_2)\) as:
\begin{equation} \label{eq:wass1}
 W_1(\phi(\mathbf{x}, \mathbf{u}) - \gamma \hat{\rho}_\beta(\hat{Z}_{1,c}^\mu), \phi(\mathbf{x}, \mathbf{u}) - \gamma \hat{\rho}_\beta(\hat{Z}_{2,c}^\mu)).
\end{equation}
Since \(\phi(\mathbf{x}, \mathbf{u}) \) is deterministic, it cancels out, yielding:
\begin{equation} \label{eq:wass2}
W_1(-\gamma \hat{\rho}_\beta(\hat{Z}_{1,c}^\mu), -\gamma \hat{\rho}_\beta(\hat{Z}_{2,c}^\mu)).
\end{equation}
By the non-expansiveness of \(\hat{\rho}_\beta\):
\begin{equation} \label{eq:nonexp}
W_1(\hat{\rho}_\beta(\hat{Z}_{1,c}^\mu), \hat{\rho}_\beta(\hat{Z}_{2,c}^\mu)) \leq W_1(\hat{Z}_{1,c}^\mu, \hat{Z}_{2,c}^\mu).
\end{equation}
Multiplying the L.H.S of \cref{eq:nonexp} by \(\gamma\), it follows that:
\begin{align} 
W_1&(-\gamma \hat{\rho}_\beta(\hat{Z}_{1,c}^\mu), -\gamma \hat{\rho}_\beta(\hat{Z}_{2,c}^\mu))\nonumber\\\label{eq:wass3}
&= \gamma W_1(\hat{\rho}_\beta(\hat{Z}_{1,c}^\mu), \hat{\rho}_\beta(\hat{Z}_{2,c}^\mu)) \leq \gamma W_1(\hat{Z}_{1,c}^\mu, \hat{Z}_{2,c}^\mu).
\end{align}
By invoking the \newchangesnorev{bounded-expectation assumption of the distributions in \(\mc{Z}_c\) \newchangesnorev{(since \(\hat{Z}_{1,c}^\mu, \hat{Z}_{2,c}^\mu \in \mc{Z}_c\))}}, we get:
\begin{equation} \label{eq:viocost}
W_1(\hat{Z}_{1,c}^\mu, \hat{Z}_{2,c}^\mu) \leq W_1(Z_1, Z_2).
\end{equation}
By the assumptions on \(\Psi\), there exists an \(0 < L < 1\) s.t.
\begin{equation}\label{eq:wassfin}
W_1(\Psi(Z_1), \Psi(Z_2)) \leq L \cdot W_1(Z_1, Z_2),
\end{equation}
\newchangesnorev{i.e., \(\Psi\) is an \(L\)-contraction with respect to \(||~||\).} \newchangesnorev{Since \({L < 1}\) and \(||\mathcal{T}^{\beta} Z_1 - \mathcal{T}^{\beta} Z_2|| \le ||Z_1 - Z_2||\)}, applying \cref{eq:rsbellop} yields:
\begin{equation}
W_1(\mathcal{T}^{\beta} Z_1, \mathcal{T}^{\beta} Z_2) \leq \gamma W_1(Z_1, Z_2).
\end{equation}
Hence, \( \mathcal{T}^{\beta}\) is a \(\gamma\)-contraction.
\end{proof}

\begin{corollary}[Existence of a Unique Cumulative Cost Distribution]\label{cor:unqretdist}%
From \cref{prp:ctrbell}, by Banach's fixed-point theorem \cite{bharucha-reidFixedPointTheorems1976}, there exists a unique fixed point \( Z^* \) such that:
\begin{equation} \label{eq:fixed_point}
Z^* = \mathcal{T}^{\beta} Z^*.
\end{equation}
\end{corollary}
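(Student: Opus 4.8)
The plan is to obtain the Corollary as a one-step consequence of Banach's fixed-point theorem, whose hypotheses are precisely (a) a complete metric space and (b) a strict contraction that is a self-map of it. Ingredient (b) is handed to us by \cref{prp:ctrbell}, which gives $W_1(\mathcal{T}^\beta Z_1, \mathcal{T}^\beta Z_2) \le \gamma W_1(Z_1, Z_2)$ with $\gamma \in (0,1)$; the remaining work is to fix the right metric space, confirm it is complete, and confirm $\mathcal{T}^\beta$ keeps us inside it.

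First I would pin down the metric: since an element $Z \in \mathcal{Z}$ is a map $(\mathbf{x}, \mathbf{u}) \mapsto Z(\mathbf{x}, \mathbf{u})$ from state-control pairs to cost distributions, the appropriate object is the supremal Wasserstein-1 metric $\bar{W}_1(Z_1, Z_2) := \sup_{\mathbf{x},\mathbf{u}} W_1\big(Z_1(\mathbf{x},\mathbf{u}), Z_2(\mathbf{x},\mathbf{u})\big)$, as is standard in distributional RL \citep{dabneyDistributionalReinforcementLearning2017}; because \cref{prp:ctrbell} is proved pointwise in $(\mathbf{x},\mathbf{u})$, the contraction bound lifts verbatim to $\bar{W}_1$. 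For completeness of $(\mathcal{Z}, \bar{W}_1)$ I would note that the boundedness hypotheses on $\mathcal{Z}_c$ and on $\Psi$ in \cref{prp:ctrbell} (together with the discounting $\gamma<1$) confine the relevant cost values to a bounded interval of $\mathbb{R}$, which is Polish; hence every probability measure on it has finite first moment, the set of such measures is complete under $W_1$, and an $\ell^\infty$-style product of copies of this space over the index set of state-control pairs is complete under $\bar{W}_1$, i.e.\ $(\mathcal{Z}, \bar{W}_1)$ is complete. Finally I would verify $\mathcal{T}^\beta: \mathcal{Z} \to \mathcal{Z}$: the stage cost $g_t$ is bounded and $\hat{\rho}_\beta[Z_c^\mu](\mathbf{x}^a)$ is finite by the standing assumptions of \cref{def:rsbellop}, so $\mathbb{E}[\mathcal{T}^\beta Z(\mathbf{x},\mathbf{u})] < \infty$ and $\mathcal{T}^\beta Z \in \mathcal{Z}$.

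With $(\mathcal{Z}, \bar{W}_1)$ complete, $\mathcal{T}^\beta$ a self-map, and $\mathcal{T}^\beta$ a $\gamma$-contraction with $\gamma < 1$, Banach's theorem \citep{bharucha-reidFixedPointTheorems1976} yields at once a unique $Z^* \in \mathcal{Z}$ with $\mathcal{T}^\beta Z^* = Z^*$, plus the geometric convergence $\bar{W}_1\big((\mathcal{T}^\beta)^k Z_0, Z^*\big) \le \gamma^k\, \bar{W}_1(Z_0, Z^*) \to 0$ from any $Z_0 \in \mathcal{Z}$, so the risk-sensitive distributional value iteration converges to $Z^*$. I expect the completeness step to be the main obstacle: one must state the metric space precisely (the supremal $\bar{W}_1$, on a class of maps with uniformly controlled first moments) so that $\bar{W}_1$-limits stay in $\mathcal{Z}$, and one must also read \cref{prp:ctrbell} as a genuine contraction — contraction constant strictly below one — which holds only for $\gamma<1$ and should therefore be recorded as a hypothesis of the Corollary.
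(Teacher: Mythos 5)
Your proposal is correct and follows essentially the same route as the paper: the corollary is obtained by a direct appeal to Banach's fixed-point theorem applied to the $\gamma$-contraction established in \cref{prp:ctrbell}, which is all the paper itself offers. Your extra verifications --- fixing the supremal $W_1$ metric, checking completeness and the self-map property, and flagging that a strict contraction needs $\gamma<1$ while the paper nominally allows $\gamma\in(0,1]$ --- are a careful filling-in of hypotheses the paper leaves implicit rather than a different argument.
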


\begin{remark}[Convergence of Risk-Sensitive {\qravi}]
Suppose $\mathcal{X}$ and $\mathcal{U}$ are continuous and bounded, with \(\mathcal{L}\) and \(\mathcal{L}_\rho\) convex, and assume that the agent explores the environment sufficiently often. Then, under a diminishing learning rate schedule such as 
\(\alpha^t = k_\alpha / (t+1)\), where \(k_\alpha\) controls the decay rate, the risk-sensitive {\qravi} algorithm will converge to a Pareto-optimal action-value in finite time.
This result follows from the convexity of the combined quantile and risk-sensitive losses, which guarantees that any local Pareto optimum is also a global one \cite{van2014multi}, together with standard stochastic approximation results that ensure convergence under diminishing step sizes and sufficient exploration \cite{bertsekas_model_2024}.
\end{remark}

\section{Experiments}
\label{sec:exp}
In this section, we provide a comprehensive overview of our experimental study, detailing our simulation setup, followed by our training and evaluation procedures, and concluding with comparisons with a risk-neutral variant. 
\subsection{Safety-Gymnasium Learning Environment}
\label{ssec:learnenv}
In this experiment, we train a differentially-driven mobile robot (depicted as a red wheeled car in \cref{fig:learn-env}) to navigate to a randomized goal location (\(\mathbf{x}^g \in \mathbb{R}_2\)) while minimizing the cost of traversing restricted areas and avoiding obstacles. The robot is equipped with a lidar sensor to measure distances to various environmental features such as obstacles and the goal. The state space, control space, and the observations that the agent makes, as well as the goal of the experiment, are outlined below:
\begin{subequations}\label{eq:models}
\begin{alignat}{2}
\mathcal{X} &= 
    \left\{\mathbf{x}_t = (\mathbf{x}^a_t,\, \mathbf{x}^s,\, \mathbf{x}^h,\, \mathbf{x}^g)
    \;\middle|\; t = 1,2,\ldots,T \right\}
    &\qquad&\textcolor{blue}{(\text{state space})}
    \label{eq:xmodel}
\\[4pt]
\mathcal{U} &=
    \left\{ \mathbf{u}_t = [v^L_t,\, v^R_t]
    \;\middle|\; t = 1,2,\ldots,T \right\}
    &\qquad&\textcolor{blue}{(\text{control space})}
\\[6pt]
C_t &=
\begin{cases}
c^s_t(\mathbf{x}^a_t) = c_s \sim \mathrm{U}_{[0,1]}\,\mathbb{I}_{\mathcal{X}^s}(\mathbf{x}^a_t),\\[2pt]
c^h_t(\mathbf{x}^a_t) = c_h \sim \mathrm{U}_{[0,1]}\,\mathbb{I}_{\mathcal{X}^h}(\mathbf{x}^a_t),\quad c_h > c_s,
\end{cases}
    &\qquad&\textcolor{blue}{(\text{safety-violation costs})}
    \label{eq:sfcosts}
\\[10pt]
f^a(\mathbf{x}^a_t, \mathbf{u}_t) &=
\begin{cases}
x_{t+1}      = x_t + \tfrac{r}{2}(v_t^R + v_t^L)\cos\theta_t,\\
y_{t+1}      = y_t + \tfrac{r}{2}(v_t^R + v_t^L)\sin\theta_t,\\
\theta_{t+1} = \theta_t + \tfrac{r}{2d_w}(v_t^R - v_t^L),
\end{cases}
    &\qquad&\textcolor{blue}{(\text{transition dynamics})}
\end{alignat}
\end{subequations}

In \cref{eq:models}, \(\mathbf{x}^a_t = [x_t, y_t, \theta_t]^\top\) defines the robot's state vector comprising the position of its \(t\)-step center of curvature and orientation (\(\theta_t\)), \newchangesnorev{and \(\mathbf{x}^s \text{ and } \mathbf{x}^h\) respectively denote the position of the restricted region markers and obstacles}. \(v^L_t\) and \(v^R_t\) are the robot's left and right wheel (linear) velocities, \(r\) is its wheel radius, \(d_w\) is the half-distance between the robot's two wheels, and \(\mathbb{I}_{\mc{X}_\star}\) is an indicator function. The environment contains 10 restricted region markers, 10 obstacles scattered around the map, and a single goal location. The restricted regions, goal, and obstacle locations are randomized according to a uniform distribution, \(\mathrm{U}_{(a, b)}\), {at the beginning of each episode (with \(a, b \in [-1, 1]\)) and remain static (and hence do not depend on \(t\) in \cref{eq:xmodel}) till the episode ends}, i.e., after a prescribed number of time steps, \(T\). 

The observation space consists of 72 dimensions, with the last 48 representing 16 lidar measurements of the distance to the goal, restricted regions, and obstacles within a 3-meter range. Restricted areas are traversable and less severe constraints to avoid, while obstacles (cubes) are hard constraints that must be strictly avoided, incurring higher scalar costs (\(c_h > c_s\); see \cref{eq:sfcosts} for the definition of each cost variable). The agent must minimize the cumulative cost of violating these constraints over a training episode, as in the following equation: \(\sum_{t=0}^{T-1}{(c_h + c_s)}.\)
As such, the optimal policy may not always be able to ensure total avoidance of the obstacles or restricted areas, and must rather minimize this cost. {There are thus three levels of uncertainty arising from randomness or noise in: \emph{lidar measurements}, \emph{robot drift}, and \emph{randomization of environment entities}.} An episode ends after the prescribed maximum time horizon, $T$, setting a terminal state indicator, $d$, to 1. During an episode, the robot may reach as many goals as possible.
\smallskip
\subsubsection{Training and Evaluation}
The agent learns to minimize the stage cost, defined by the change in Euclidean distance to the static goal, \(\mathbf{x}^g\), between consecutive time steps, as in: 
\[g_t = -\gamma(||\mathbf{x}^a_{t} - \mathbf{x}^g||) \,-\, ||\mathbf{x}^a_{t-1} - \mathbf{x}^g|| - \cgoal,\]
where \(\cgoal\) is a positive scalar. To encourage exploration, we applied an epsilon-greedy linear exploration schedule, i.e., {{\(\max\{t(\epsilon_T - \epsilon_0)\Delta t^{-1}, \epsilon_T\}\)}}. To train all permutations of the {\qravi} algorithm (with \(N_\tau = 32\) quantile fractions), we employed a feedforward neural network with three fully-connected (dense) layers containing two hidden layers (of dimension 120 and 84, respectively) with ReLU activation functions, and an output layer that produces values for each (control) input-quantile pair at each time step.

For the KDE cost distribution estimation, we used a Gaussian kernel function with a bandwidth determined by \textit{Scott's rule} \cite{scott_biased_1987}, i.e., \(h = B^{0.2}\), and trained ${\rsqravi}$ for $\beta = 0.9$ and $0.95$. On \cref{fig:tr-curves}, we plot training curves for the expected cumulative cost \cref{fig:tr-curves-a}, risk \cref{fig:tr-curves-b}, and quantile loss \cref{fig:tr-curves-c} for all {\qravi} variants. We evaluated the models using five random seeds (0, 5, 10, 15, 20), with 20 episodes per seed, resulting in a total of 100 test episodes. Each seed ensures reproducibility by initializing the random number generator. Next, we computed the average action value, constraint violation cost, number of successful episodes, and goal success rate over all seeds to compare the performance of the algorithms. Here, the success rate represents the number of times the agent reaches the goal, averaged over all 100 test episodes. \cref{tab:learnparams} summarizes our RL hyperparameters.
\begin{figure*}[htb]
    \centering
    \hspace{-10pt}
    \subfloat[Cost-to-go per time step.]{%
    \label{fig:tr-curves-a}%
    \includegraphics[width=.35\columnwidth,trim={0 0 0cm 0cm},clip ]{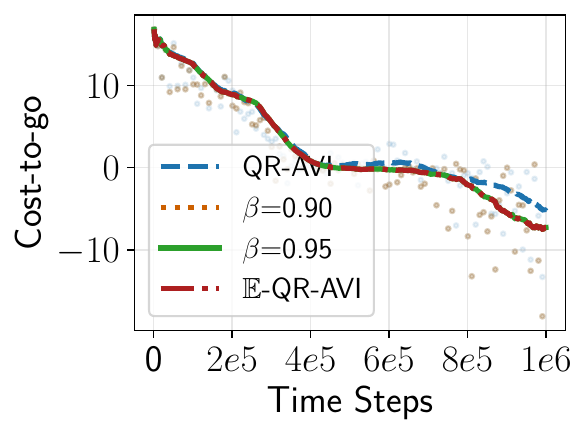}}
    \subfloat[Risk loss ($\mathcal{L}_\rho$).]{%
    \label{fig:tr-curves-b}%
    \includegraphics[width=.35\columnwidth,trim={0 0 0cm 0cm},clip]{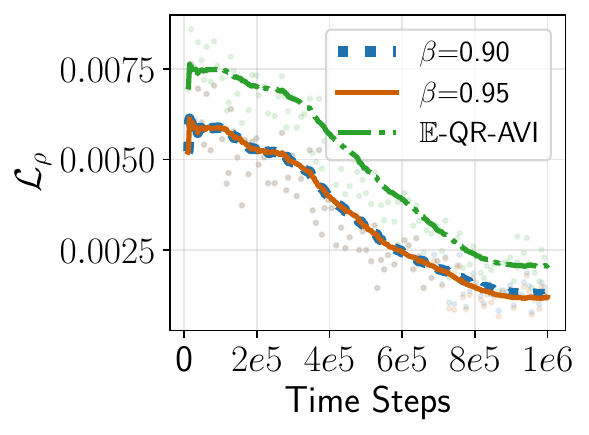}}
    \subfloat[Quantile loss ($\mathcal{L}_{QR}$).]{%
    \label{fig:tr-curves-c}%
    \includegraphics[width=.35\columnwidth,trim={0 0 0cm 0cm},clip]{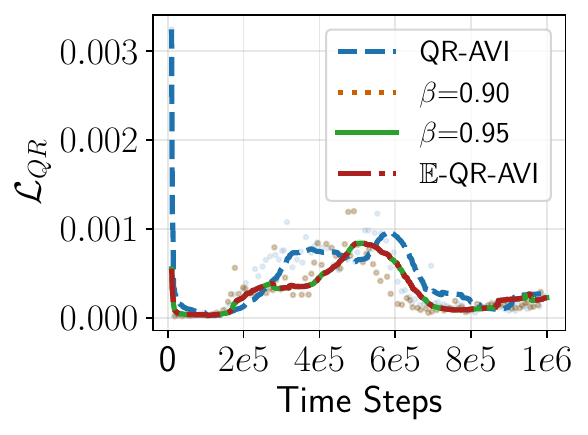}}
    \caption{Evolution of the (training) expected cumulative cost, risk loss, and quantile loss for the reach-avoid navigation task.}
    \label{fig:tr-curves}
\end{figure*}
\begin{table*}[htb]
    \centering
    \small
    \caption{Training and Evaluation Hyperparameters}
    \label{tab:learnparams}
    \begin{NiceTabular}{llllll}
        \thickhline
        \textbf{Param.} & \textbf{Value} & \textbf{Param.} & \textbf{Value} & \textbf{Param.} & \textbf{Value} \\
        \midrule
        $B$, $\alpha$ & $128$, $2.5e$-$4$ & $c_{\mathrm{max}}$, $\beta$ & $0.1$, $\{.9, .95\}$ & $\gamma$, $\numquantiles$, $T$ & $0.99$, $32$, $1000$ \\
        $\epsilon_0$, $\epsilon_T$, $\kappa$ & $1.0$, $0.05$, $1.0$ & $|\mathcal{D}|$ & $0.5e6$ & Upd. Freq. & $10$ (train), $500$ (target) \\
        \thickhline
    \end{NiceTabular}
\end{table*}

\subsection{Juxtaposition with the Risk-Neutral Algorithm}\label{ssec:benchm}
We compare the risk-sensitive {\qravi} against the nominal and expected-value configurations across evaluation metrics. \cref{tab:benchm} juxtaposes our risk-sensitive adaptation with the baselines on the training and evaluation metrics. From the plots, we notice a marked improvement in the average expected cumulative cost for the risk-sensitive {\qravi} than the baseline, with a mostly consistent cost evolution. The agent demonstrates a success rate (that is approximately \textbf{37.66\%} and \textbf{87.45\%} higher per episode) than the baselines (nominal and risk-sensitive, respectively). This is reflected in both the increased success rate and the higher total number of successes across all scenarios and episodes.

\begin{table*}[ht]
\small
\centering
\caption{Comparison of success metrics and loss values across experiments. Mean and standard deviation of the evaluation cost were computed over 100 test episodes. Results using our method are highlighted in \colorbox{gray!20}{gray}.}
\label{tab:benchm}
\resizebox{\textwidth}{!}{%
\begin{tabular}{p{2.45cm}C{1.5cm}C{2cm}C{2cm}C{2cm}C{2cm}C{2.8cm}}
\toprule
\textbf{Alg.} & \adjustbox{stack=cc}{\textbf{Avg. Eval.} \\ \textbf{Cost-to-go}} & \adjustbox{stack=cc}{\textbf{Constraint-} \\ \textbf{Violation Cost}} & \adjustbox{stack=cc}{\textbf{Quantile} \\ \textbf{Loss (Avg.)}} & \adjustbox{stack=cc}{\textbf{Total} \\ \textbf{Loss}} &  \adjustbox{stack=cc}{\textbf{Total}\\\textbf{Goals Reached}} & \adjustbox{stack=cc}{\textbf{Normalized\\\textbf{Success Rate (\%)}}} \\
\midrule
{{\qravi}} & $-0.01$ & $0.05 \pm \mathbf{0.01}$ & $0.0004$ & $0.0004$ & $239$ & $50\%$ \\
{${\eqravi}$} & $-0.01$ & $0.06 \pm \mathbf{0.01}$ & $0.0003$ & $0.0032$ & $329$ & $75\%$ \\
\rowcolor{gray!20}{${\rsqravi}^{{\textcolor{blue}{\bm{\beta=0.9}}}}$} & $-0.01$ & $0.05 \pm \mathbf{0.01}$ & $0.0003$ & $0.0023$ & $448$ & $100\%$ \\
\rowcolor{gray!20}{${\rsqravi}^{{\textcolor{blue}{\bm{\beta=0.95}}}}$} & $-0.01$ & $0.05 \pm \mathbf{0.01}$ & $0.0003$ & $0.0023$ & $448$ & $100\%$\\
\bottomrule
\end{tabular}%
}
\end{table*}
\begin{figure}[htb]
    \centering
        \subfloat[Pareto front (dashed line) corresponding to data points of the quantile loss (\(\mathcal{L}_{QR}\), y-axis) and risk loss (\(\mathcal{L}_{\rho}\), x-axis), for the risk-sensitive ({\(\beta=0.95\) \tikzcircle[black, fill=parblue]{2pt}}; {\(\beta=0.9\) \tikzsquare{parorange}}) and risk-neutral ({\tikztriangle{pargreen}}) cases.]{%
        \label{fig:paretocurve}        \includegraphics[width=.45\columnwidth]{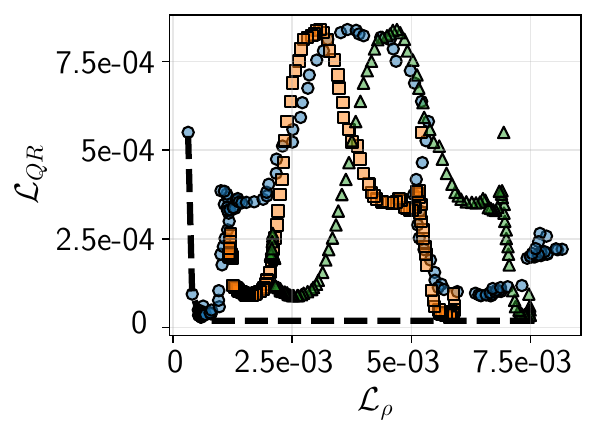}}
        \subfloat[Comparing the \emph{tails} of the KDE-estimated cost distribution and a normal distribution with mean \(0.06\) and standard deviation, \(0.1\).]{%
        \label{fig:distcomp}%
        \includegraphics[width=.45\columnwidth]{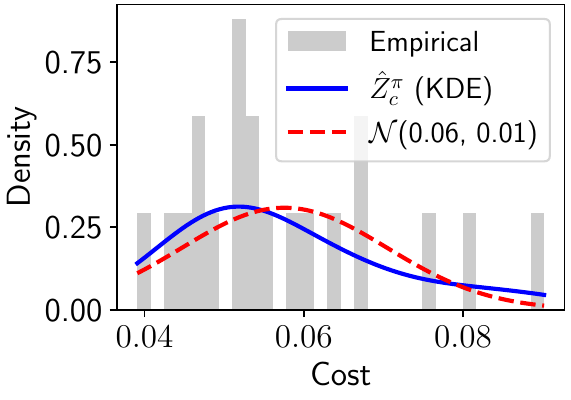}}
    \caption{Pareto front and cost distribution approximation.}
\end{figure}%

\begin{table}[htb]
    \centering
    \caption{$\beta$ vs. the tail probability difference (\(\Delta_{\text {tail}}\)) between \(\hat{Z}^{\mu}_c\) (KDE-based) and $\mathcal{N}(0.06, 0.01)$.}
    \label{tab:kdevsnormptail}%
    \renewcommand{\arraystretch}{0.9}  
    \begin{tabular}{C{1.2cm}C{1.8cm}}  
    \toprule 
    \(\bm{\beta}\) & \(\bm{\Delta_{\text{tail}}}\) \\
    \midrule
    0.90 & 0.042 \\
    0.95 & 0.041 \\
    0.99 & 0.027 \\
    \bottomrule
    \end{tabular}
\end{table}

\subsection{Discussions}
\label{ssec:dis}
Here, we expound on the findings presented in the foregoing subsections and highlight a few implementation details.
\smallskip
\subsubsection{\normalfont\textbf{Training and Evaluation Metrics}}\label{ssec:trevalmet} Comparing the average quantile losses (\cref{tab:benchm}, column 4) reveals that all algorithms achieve near-identical performance, with average losses ranging between \(3e{\text{-}4}\) and \(4e{\text{-}4}\). The nominal {\qravi} exhibits the highest quantile loss, while the risk-sensitive (\(\beta = 0.9\) and \(\beta = 0.95\)) and expected-value variants achieve slightly lower values of \(3e{\text{-}4}\)). Comparisons between ${\rsqravi}$ and ${\eqravi}$ show no significant difference in quantile loss, indicating comparable regression accuracy. Both variants of ${\rsqravi}$ also yield similar returns and costs to the baselines but with a substantially higher success rate.

\smallskip
\subsubsection{\normalfont\textbf{Risk-Performance Trade-offs}}\label{ssec:riskperftrdf}
Given that we trained the {{\qravi}} model on a single neural network, a natural question about the performance-risk trade-off arises. Here, performance is quantified by the minimum quantile loss. The Pareto curve, presented in \cref{fig:paretocurve}, depicts the trade-off between the quantile (\(\mathcal{L}_{QR}\)) and risk losses (\(\mathcal{L}_{\rho}\)) for the expectation and risk-sensitive configurations of {\qravi}. The plot shows that \({\eqravi}\) achieves the lowest quantile loss, indicating better accuracy in predicting cumulative cost, but at the expense of a significantly higher risk loss, highlighting its limitations in risk-sensitive settings. In contrast, the risk-sensitive configurations with \(\beta=0.9\) and \(\beta=0.95\) prioritize safety by achieving lower risk loss values while incurring moderate increases in quantile loss. For safety-critical tasks, the setting with \(\beta=0.95\) is preferable due to its low risk loss, despite a marginally higher quantile loss.%

\smallskip
\subsubsection{\normalfont\textbf{Determining a Fitting Cost Distribution Approximation}}\label{ssec:costdistapx}
Due to environmental randomization, batch costs from the replay buffer may produce empirical distributions with light {\em right} tails (\cref{fig:distcomp}) and negligible probability mass for extreme costs, leading to non-informative upper quantiles that coincide with the mean (\cref{tab:kdevsnormptail}). We thus favored KDE over simpler methods like softmax or the normal distribution for a more accurate cost distribution approximation.

\section{Conclusions}\label{sec:conc}
We introduced a risk-sensitive quantile-based action-value iteration algorithm that balances safety and performance by augmenting the quantile loss with a risk term encoding safety constraints. Our results show that risk sensitivity preserves quantile regression accuracy and ensures consistent performance with tunable risk aversion. The method guarantees convergence to a unique risk-sensitive cost distribution, providing a theoretical foundation. The risk measure is compatible with any off-policy RL model and can be integrated into gradient ascent. Future work will explore dynamic risk parameter adjustments for improved trade-offs in varying conditions.


%
\acknowledgments{This work was partially supported by the Army Research Laboratory Cooperative Agreement No. W911NF-23-2-0040, a Northrop Grumman Corporation grant, and by the Lockheed Martin Chair in Systems Engineering.}


\bibliography{arxiv}

\end{document}